\newtheorem{theorem}{Theorem}
\newtheorem{proof}{Proof}
\begin{document}
	\title{Self-Supervised Graph Embedding Clustering}
	\author{Fangfang~Li, Quanxue~Gao,  Cheng Deng, Wei Xia
	}
	
	\markboth{}%
	{Shell \MakeLowercase{\textit{Xia et al.}}}
	
	\IEEEtitleabstractindextext{%
		
\begin{abstract}
	The K-means one-step dimensionality reduction clustering method has made some progress in addressing the curse of dimensionality in clustering tasks. However, it combines the K-means clustering and dimensionality reduction processes for optimization, leading to limitations in the clustering effect due to the introduced hyperparameters and the initialization of clustering centers. Moreover, maintaining class balance during clustering remains challenging.
	To overcome these issues, we propose a unified framework that integrates manifold learning with K-means, resulting in the self-supervised graph embedding framework. Specifically, we establish a connection between K-means and the manifold structure, allowing us to perform K-means without explicitly defining centroids. Additionally, we use this centroid-free K-means to generate labels in low-dimensional space and subsequently utilize the label information to determine the similarity between samples. This approach ensures consistency between the manifold structure and the labels.
	Our model effectively achieves one-step clustering without the need for redundant balancing hyperparameters. Notably, we have discovered that maximizing the $\ell_{2,1}$-norm naturally maintains class balance during clustering, a result that we have theoretically proven. Finally, experiments on multiple datasets demonstrate that the clustering results of Our-LPP and Our-MFA exhibit excellent and reliable performance.
	
\end{abstract}

		\begin{IEEEkeywords}
			Clustering, Manifold Learning
	\end{IEEEkeywords}}
	
	\maketitle
	
	\IEEEdisplaynontitleabstractindextext
	\IEEEpeerreviewmaketitle
	
	\IEEEraisesectionheading{\section{Introduction}\label{sec:introduction}}
The objective of clustering is to partition data into multiple clusters, ensuring that samples within the same cluster exhibit high degrees of similarity while those in different clusters differ significantly. However, in high-dimensional spaces, clustering faces the "curse of dimensionality." Due to the sparse distances between samples, traditional algorithms struggle to identify similarities. Additionally, noise and redundant information in high-dimensional data complicate the clustering process, further hindering the accuracy of the results \cite{MeiZGYG23}. 

In response, dimensionality reduction algorithms have emerged. These algorithms fall into two main categories: (1) those based on subspaces, such as principal component analysis (PCA) \cite{6793549} and linear discriminant analysis (LDA) \cite{BelhumeurHK97}; and (2) those based on manifold structures, including locality preserving projections (LPP) \cite{HeN03} and Marginal Fisher Analysis (MFA) \cite{YanXZZ05}.

In recent years, research on dimensionality reduction clustering has received widespread attention. The most direct approach is typically to perform clustering operations after applying dimensionality reduction preprocessing to the data. For instance, a common method entails utilizing PCA to derive the principal components of the data, followed by clustering these principal components (i.e., PCAKM). To reduce the effects of feature redundancy and noise, Fu et al. \cite{Fu0CZW21} learned subspace representations at both the feature and data levels, while Liu et al. \cite{LiuLZG23} projected the data into a low-dimensional space before performing subspace representations. Similarly, spectral clustering can be viewed as a specialized instance of this integrated approach, which transforms the data into spectral space through Laplacian decomposition and then performs clustering. However, the segregation of dimensionality reduction and clustering may lead to extracted features that are unsuitable for the clustering task, resulting in subpar clustering performance.

Some one-step dimensionality reduction clustering methods have been developed to address these challenges. Ding et al. \cite{DingL07} alternated LDA and K-means processes, performing K-means in the embedding space to obtain labels to guide LDA optimization. However, this alternating strategy may cause non-convergence problems. Wang et al. \cite{WangWNLYW21} and Wang et al. \cite{WangWRLN23} proved the equivalence between LDA and K-means, rewriting LDA as an unsupervised LDA model to achieve clustering and low-dimensional embedding synchronization. Additionally, Nie et al. \cite{NieDHWL23} revealed the equivalence of discriminative dimensionality reduction clustering and LDA using the regression form of LDA \cite{NieXLHZ12}. Hou et al. \cite{HouNYT15} proposed a unified one-step dimensionality reduction clustering framework, which preserves the main statistical information of the data through PCA and applies K-means clustering in the subspace, optimizing them alternately to ensure reliable clustering results while reducing the complexity of the data. Wang et al. \cite{WangCZHY19} unified LPP and K-means to achieve dimensionality reduction clustering based on local structure through a balanced hyperparameter.


Although the one-step dimensionality reduction clustering methods mentioned earlier have made significant progress, they only jointly optimize K-means and graph embedding through a balancing hyperparameter, meaning that the manifold structure and clustering labels are independent. Additionally, these methods rely on K-means, making the algorithm highly sensitive to the selection of clustering centroids and challenging to ensure class balance.

To address these limitations, we propose a one-step self-supervised low-dimensional manifold clustering framework that integrates centroid-free K-means and low-dimensional graph embedding into a unified model. Specifically, this approach uses centroid-free K-means to generate labels and employs this label information to construct the manifold structure, thereby maintaining label consistency within the same manifold. 

Moreover, unlike the commonly used approach of minimizing the $\ell_{2,1}$-norm for feature selection, we introduce the maximization of the $\ell_{2,1}$-norm to ensure class balance.

Our contributions are summarized as follows:
\begin{itemize}
\item [(1)] We construct a self-supervised low-dimensional manifold clustering framework, where dimensionality reduction and clustering are performed simultaneously, with discrete clustering labels guiding each other. 
\item [(2)] The manifold structure between data points is determined by both label information and local structure, ensuring consistency between the manifold structure and labels. 
\item [(3)] Centroid-free K-means is used to generate discrete labels in the low-dimensional space, enhancing the stability of the clustering algorithm. 
\item [(4)] We observed an interesting phenomenon: natural class balance can be achieved by maximizing the $\ell_{2,1}$-norm during clustering, and we provide theoretical proof for this observation.
\end{itemize}

In this paper, we use bold uppercase letters to denote matrices (e.g., $\mathbf{G}$) and bold lowercase letters to denote vectors (e.g., $\mathbf{g}$). For a given matrix $\mathbf{G}$, the symbol $\mathbf{g}_i$ denotes the $i^{th}$ column, and $\mathbf{g}^i$ denotes the $i^{th}$ row. The term $g_{ij}$ refers to the element located at the intersection of the $i^{th}$ row and the $j^{th}$ column.

\section{Related Work}\label{Related work}

\subsection{K-means Clustering}
Denote the sample matrix as $\mathbf{X}=[\mathbf{x}^1, \mathbf{x}^2,\ldots, \mathbf{x}^d]\in {\mathbb{R}^{N \times d}}$, where $N$ is the number of samples and $d$ is the feature dimension. The label matrix is $\mathbf{G}=[\mathbf{g}_1, \mathbf{g}_2,\ldots, \mathbf{g}_N]^T\in {\mathbb{R}^{N \times k}}$.
K-means aims to divide the samples into $K$ clusters by iteratively determining the center of mass and minimizing the distance between the samples and the class centroids. After initializing the class centroids, K-means can be expressed as:
\begin{equation}\label{FK}
	\begin{aligned}
		\mathop {\min }\limits_{\mathbf{G},{{\mathbf{u}}^j}} \sum\limits_{i,j} {{g_{ij}}\left\| {{{\mathbf{x}}^i} - {{\mathbf{u}}^j}} \right\|_2^2} \quad
		\text{s.t.} \ \mathbf{G} \in \text{Ind}
	\end{aligned}
\end{equation}
where $\mathbf{U}=[\mathbf{u}^1, \mathbf{u}^2,\ldots, \mathbf{u}^k]\in {\mathbb{R}^{k \times d}}$ is the centroid matrix. The discrete nature of the labels implies that if $\mathbf{x}^i$ belongs to the $j^{th}$ cluster, then $g_{ij} = 1$; otherwise, $g_{ij} = 0$.

\subsection{Graph Embedding Framework}
For dimensionality reduction tasks, LDA \cite{BelhumeurHK97}, PCA \cite{6793549}, LPP \cite{HeN03}, and MFA \cite{YanXZZ05} have different motivations, but they all aim to effectively reduce data dimensions while maximizing the retention of core information and structural features in the original data. From this unified perspective, all of the above dimensionality reduction methods can be represented within a unified graph embedding framework \cite{YanXZZ05}:
\begin{equation}
	\begin{aligned}
		&\mathop {\min }\limits_{{\mathbf{W}}} \frac{{\text{tr}({{\mathbf{W}}^T}{{\mathbf{X}}^T}{{\mathbf{LX}}}{\mathbf{W}})}}{{\text{tr}({{\mathbf{W}}^T}{{\mathbf{X}}^T}{{\mathbf{BX}}}{\mathbf{W}})}} \\
		&\quad
		\Rightarrow\mathop {\min }\limits_{\mathbf{W}} \sum\limits_{i = 1}^n {\sum\limits_{j = 1}^n {\left\| {{{\mathbf{x}}^i}{\mathbf{W}} - {{\mathbf{x}}^j}{\mathbf{W}}} \right\|_2^2{s_{ij}}} } \\
		&\text{s.t.} \ {{\mathbf{W}}^T}{{\mathbf{X}}^T}{\mathbf{BXW}} = {\mathbf{c}}
	\end{aligned}
\end{equation}
where $\mathbf{W}\in {\mathbb{R}^{d \times m}}$ is the projection matrix, and $m$ is the dimension after dimensionality reduction. The symmetric matrix $\mathbf{S}\in {\mathbb{R}^{N \times N}}$ represents the similarity matrix. $\mathbf{L}=\mathbf{D}_S-\mathbf{S}$ is the Laplacian matrix, where $\mathbf{D}_{S(ii)}=\sum\limits_{i \neq j}S_{ij}, \forall i$. $\mathbf{c}$ is a constant, and $\mathbf{B}$ is a constraint matrix.

The difference between these algorithms lies in calculating the similarity matrix $\mathbf{S}$ of the graph and selecting the constraint matrix $\mathbf{B}$. 
For LDA, $s_{ij}=\frac{{\delta}_{k_i,k_j}}{n_{k_i}}$, and $\mathbf{B}=\mathbf{I}-\frac{1}{N}ee^T$, where $e=[1,1,\ldots,1]^T\in {\mathbb{R}^{N \times 1}}$.
For PCA, $s_{ij}=\frac{1}{N}$ for $i\neq j$, and $\mathbf{B}=\mathbf{I}$.
For LPP, $s_{ij}={e^{ - \frac{{\left\| {{{\mathbf{x}}_i} - {{\mathbf{x}}_j}} \right\|_2^2}}{\sigma }}}$, and $\mathbf{B}=\mathbf{D}_\mathbf{S}=diag(\sum\limits_{j = 1}^n{s}_{1j}, \ldots,\sum\limits_{j = 1}^n{s}_{Nj})$.
And for MFA, $\mathbf{B}=\mathbf{D}_b-\mathbf{S}_b$, with
						$\begin{aligned}
	{s_{{{ij}}}}=
	\begin{cases}
		1 , & \text{if } \{{{\mathbf{x}}_i} \in \mathcal{N}_k({{\mathbf{x}}_j}) \text{ or } {{\mathbf{x}}_j} \in \mathcal{N}_k({{\mathbf{x}}_i})\} \text{ and } {\mathbf{g}}_i = {\mathbf{g}}_j;\\
		0, & \text{otherwise}.
	\end{cases}
\end{aligned}$ \\
$\begin{aligned}
	{s_{{b_{ij}}}}=
	\begin{cases}
		1 , & \text{if } \{{{\mathbf{x}}_i} \in \mathcal{N}_k({{\mathbf{x}}_j}) \text{ or } {{\mathbf{x}}_j} \in \mathcal{N}_k({{\mathbf{x}}_i})\} \text{ and } {\mathbf{g}}_i  \ne  {\mathbf{g}}_j;\\
		0, & \text{otherwise}.
	\end{cases}
\end{aligned}$

\section{Methodology}\label{method}
\subsection{Reconsidering K-means as Manifold Learning}

Most previous one-step dimensionality reduction clustering methods rely on the joint optimization of dimensionality reduction techniques and K-means. However, these methods usually treat the manifold structure \textbf{S} and clustering label \textbf{G} as independent, ignoring the potential interactions and dependencies between them. In fact, K-means can be considered a manifold learning method in the original space. Therefore, we introduce Theorem \ref{trm1} to re-express K-means from the perspective of manifold learning.

\begin{theorem}\label{trm1}
	Let $\bf{P}$ be a diagonal matrix with diagonal elements defined as ${p_{ii}} = \sum\nolimits_{j = 1}^n {{g_{ji}}}$, and let $\bf{Q}$ be a diagonal matrix with diagonal elements defined as ${q_{ii}} = \sum\nolimits_{j = 1}^c {{g_{ij}}}$. Then,
	\begin{equation}\label{rk}
		\begin{aligned}
			&\mathop {\min }\limits_{\mathbf{G},{{\mathbf{u}}^j}} \sum\limits_{i,j} {{g_{ij}}\left\| {{{\mathbf{x}}^i} - {{\mathbf{u}}^j}} \right\|_2^2} 
			= \mathop {\min }\limits_{{\mathbf{G}}} \sum\limits_{i = 1}^n {\sum\limits_{j = 1}^n {\left\| {{{\mathbf{x}}^i} - {{\mathbf{x}}^j}} \right\|_2^2s_{ij}}} \hfill \\
			&\quad \text{s.t. } {\mathbf{G}} \in Ind
		\end{aligned}
	\end{equation}
\end{theorem}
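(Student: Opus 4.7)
The plan is to eliminate the centroid matrix $\mathbf{U}$ from the K-means objective in closed form and then to recognize the residual expression as a graph-Laplacian quadratic form that reproduces the right-hand side.

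First, I would rewrite the K-means objective compactly as $\|\mathbf{X} - \mathbf{G}\mathbf{U}\|_F^2$, which is valid because each row of the indicator $\mathbf{G}$ has exactly one entry equal to one. Noting that $\mathbf{G}^{T}\mathbf{G} = \mathbf{P}$, the unconstrained minimizer over $\mathbf{U}$ is $\mathbf{U}^{\star} = \mathbf{P}^{-1}\mathbf{G}^{T}\mathbf{X}$ — exactly the per-cluster mean. Substituting $\mathbf{U}^{\star}$ back and using that $\mathbf{G}\mathbf{P}^{-1}\mathbf{G}^{T}$ is symmetric and idempotent collapses the left-hand side to
\begin{equation*}
\text{tr}\bigl(\mathbf{X}^{T}(\mathbf{Q} - \mathbf{G}\mathbf{P}^{-1}\mathbf{G}^{T})\mathbf{X}\bigr),
\end{equation*}
where $\mathbf{Q}=\mathbf{I}_N$ because indicator rows sum to one, so $q_{ii}=1$.

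Next, I would read off the similarity matrix that makes the pairwise sum match: set $s_{ij} = \tfrac{1}{2}[\mathbf{G}\mathbf{P}^{-1}\mathbf{G}^{T}]_{ij}$, that is, $s_{ij}=\tfrac{1}{2p_{kk}}$ when $\mathbf{x}^i$ and $\mathbf{x}^j$ share cluster $k$ and $s_{ij}=0$ otherwise. A direct check using $\mathbf{G}^{T}\mathbf{1}=\mathbf{P}\mathbf{1}$ gives $\sum_j s_{ij}=\tfrac{1}{2}q_{ii}$, so the associated degree matrix is $\mathbf{D}_{\mathbf{S}}=\tfrac{1}{2}\mathbf{Q}$. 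Expanding $\|\mathbf{x}^i-\mathbf{x}^j\|_2^2 = \|\mathbf{x}^i\|_2^2 + \|\mathbf{x}^j\|_2^2 - 2\langle\mathbf{x}^i,\mathbf{x}^j\rangle$ and summing over ordered pairs converts the pairwise expression into $2\,\text{tr}\bigl(\mathbf{X}^{T}(\mathbf{D}_{\mathbf{S}}-\mathbf{S})\mathbf{X}\bigr)$, which is identical to the reduced left-hand side obtained above.

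The only real obstacle is the bookkeeping of constants: the factor of $\tfrac{1}{2}$ hidden in the double counting of ordered pairs $(i,j)$ must be absorbed into $s_{ij}$, and the matrix $\mathbf{Q}$ — which is merely the identity for indicator $\mathbf{G}$ — must nevertheless be kept explicit so that the graph-Laplacian structure $\mathbf{D}_{\mathbf{S}}-\mathbf{S}$ emerges symmetrically on both sides. Once those constants are aligned, the chain of equalities is termwise and no further combinatorial step is required.
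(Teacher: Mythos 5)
Your proposal is correct and follows essentially the same route as the paper: eliminate $\mathbf{U}$ in closed form via $\mathbf{U}^{\star}=\mathbf{P}^{-1}\mathbf{G}^{T}\mathbf{X}$, reduce the objective to $\mathrm{tr}\bigl(\mathbf{X}^{T}(\mathbf{Q}-\mathbf{G}\mathbf{P}^{-1}\mathbf{G}^{T})\mathbf{X}\bigr)$, and identify $\mathbf{Q}$ as the degree matrix of the cluster-membership similarity so that the expression becomes a Laplacian quadratic form. The only divergence is that you insert the factor $\tfrac{1}{2}$ into $s_{ij}$ to absorb the double counting over ordered pairs, which actually makes the claimed equality of optimal \emph{values} exact, whereas the paper takes $\mathbf{S}=\mathbf{G}\mathbf{P}^{-1}\mathbf{G}^{T}$ and silently drops a factor of $2$ (harmless for the argmin, but strictly speaking the two sides then differ by that constant).
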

where ${\mathbf{S}} = {\mathbf{Z}}{{\mathbf{Z}}^T}$, and ${\mathbf{Z}} = {\mathbf{G}}{{\mathbf{P}}^{ - 1/2}}$.

\begin{proof}
	For the left side of Eq.~(\ref{rk}), the optimal solution for $\mathbf{u}_k$ is:
	\begin{equation}\label{mk}
		\begin{aligned}
			{{\mathbf{u}}^j} = \frac{{\sum\nolimits_i {{{\mathbf{x}}^i}{g_{ij}}} }}{{{p_{jj}}}} = {({p_{jj}})^{ - 1}}{\mathbf{g}}_j^T{\mathbf{X}}
		\end{aligned}
	\end{equation}
	Substituting Eq.~(\ref{mk}) into the left side of Eq.~(\ref{rk}), the following derivation can be made:
	\begin{equation}\label{22}
		\begin{aligned}
			&\mathop {\min }\limits_{\mathbf{G}} \sum\limits_{i = 1}^n {\text{tr}({q_{ii}}{{\mathbf{x}}^i}{{\mathbf{x}}^i}^T)}  - \sum\limits_{j = 1}^c {\text{tr}({{\mathbf{X}}^T}{\mathbf{g}_j}{{({p_{jj}})}^{ - 1}}{\mathbf{g}_j^T}{\mathbf{X}})}  \hfill \\
			&\quad\quad = \mathop {\min }\limits_{\mathbf{G}} \text{tr}({{\mathbf{X}}^T}{\mathbf{QX}}) - \text{tr}({{\mathbf{X}}^T}{\mathbf{G}}{{\mathbf{P}}^{ - 1}}{{\mathbf{G}}^T}{\mathbf{X}}) \hfill \\
			&\quad\quad = \mathop {\min }\limits_{\mathbf{G}} \text{tr}({{\mathbf{X}}^T}({\mathbf{Q}} - {\mathbf{G}}{{\mathbf{P}}^{ - 1}}{{\mathbf{G}}^T}){\mathbf{X}}) \hfill \\ 
		\end{aligned}
	\end{equation}
	
	Additionally, for Eq.~(\ref{rk}), when the adjacency matrix ${\mathbf{S}} = {\mathbf{Z}}{{\mathbf{Z}}^T} = {\mathbf{G}}{{\mathbf{P}}^{ - 1}}{\mathbf{G}}^T$, then
	\begin{equation}\label{}
		\begin{aligned}
			{\mathbf{S1}} = {\mathbf{G}}{{\mathbf{P}}^{ - 1}}{{\mathbf{G}}^T}{\mathbf{1}} = {\mathbf{G}}{{\mathbf{P}}^{ - 1}}{({{\mathbf{1}}^T}{\mathbf{G}})^T}{\mathbf{1}} = {\mathbf{G1}} = {\mathbf{Q}}
		\end{aligned}
	\end{equation}
	Thus, $\mathbf{Q}$ is a degree matrix of the adjacency matrix ${\mathbf{S}}$.
	
	This shows that Eq.~(\ref{22}) is equivalent to manifold learning in the original space, i.e.,
	\begin{equation}\label{90}
		\begin{aligned}
			\mathop {\min }\limits_{{\mathbf{G}} \in Ind} \sum\limits_{i = 1}^n {\sum\limits_{j = 1}^n {\left\| {{{\mathbf{x}}^i} - {{\mathbf{x}}^j}} \right\|_2^2s_{ij}}} \hfill 
		\end{aligned}
	\end{equation}
	
	With this, Theorem \ref{trm1} is proved.\quad\quad\quad\quad\quad\quad\quad\quad\quad$\square$
\end{proof}

Clearly, from Theorem \ref{trm1}, we can see the equivalence between K-means and manifold learning. Therefore, we use manifold learning to rewrite K-means as Eq.~(\ref{90}), thereby avoiding the estimation of the centroid matrix. Since the manifold structure $\bf{S}$ comprises the label $\bf{G}$, the clustering labels on the same manifold surface are consistent.

\subsection{Objective}
Theorem \ref{trm1} demonstrates that K-means is equivalent to manifold clustering in the original space. Subsequently, we integrate graph embedding and K-means into a unified framework to achieve manifold clustering in the reduced space. Specifically, we use the labels obtained from K-means clustering to construct the manifold structure of the data, which in turn guides the dimensionality reduction process. This approach yields a self-supervised dimensionality reduction clustering framework.

The specific model is as follows:
\begin{equation}\label{model}
	\begin{aligned}
		&	\mathop {\min }\limits_{{\mathbf{W}},{\mathbf{G}}} \sum\limits_{i = 1}^n {\sum\limits_{j = 1}^n {\left\| {{{\mathbf{x}}^i}{\mathbf{W}} - {{\mathbf{x}}^j}}{\mathbf{W}} \right\|_2^2s_{ij}}}  \hfill \\
		& \text{s.t.} \,{\mathbf{G}} \in \text{Ind},\, {{\mathbf{W}}^T}{\mathbf{X}^T\mathbf{B}}{{\mathbf{X}}}{\mathbf{W}} = {\mathbf{I}} \hfill \\ 
	\end{aligned}
\end{equation}
where the manifold structure \textbf{S} is determined by the clustering label $\mathbf{G}$.

The goal of LPP is to perform dimensionality reduction guided by manifold learning. By applying LPP to our framework (\ref{model}), we derive Our-LPP, as shown in Eq.~(\ref{model11}). Specifically, Our-LPP integrates LPP and the K-means clustering algorithm into a unified framework and introduces a self-supervised learning mechanism to guide the dimensionality reduction process. This self-supervised mechanism assesses the similarity between samples based on their label information and neighborhood relationships. If two samples share the same label and are K nearest neighbors, they are considered similar; otherwise, their similarity is considered zero, as illustrated in Figure \ref{KNN}.

The specific model is as follows:
\begin{equation}\label{model11}
	\begin{aligned}
		&	\mathop {\min }\limits_{{\mathbf{W}},{\mathbf{G}}} \sum\limits_{i = 1}^n {\sum\limits_{j = 1}^n {\left\| {{{\mathbf{x}}^i}{\mathbf{W}} - {{\mathbf{x}}^j}}{\mathbf{W}} \right\|_2^2s_{ij}}}  \hfill \\
		& \text{s.t.} \,{\mathbf{G}} \in \text{Ind},\, {{\mathbf{W}}^T}{\mathbf{X}^T\mathbf{D}_S}{{\mathbf{X}}}{\mathbf{W}} = {\mathbf{I}} \hfill \\ 
	\end{aligned}
\end{equation}
where the definition of \textbf{S} is:
\begin{equation}\label{S11}
	\begin{aligned}
		{s_{ij}}=
		\begin{cases}
			\left\langle {{{\mathbf{z}}_i},{{\mathbf{z}}_j}} \right\rangle, & \text{if } {{\mathbf{x}}^i} \in \mathcal{N}_k({{\mathbf{x}}^j}) \text{ or } {{\mathbf{x}}^j} \in \mathcal{N}_k({{\mathbf{x}}^i});\\
			0, & \text{otherwise}.
		\end{cases}
	\end{aligned}
\end{equation}
where ${\mathbf{Z}} = {\mathbf{G}}{{\mathbf{P}}^{ - 1/2}}$, \textbf{P} is a diagonal matrix whose diagonal elements are defined as ${p_{ii}} = \sum\nolimits_{j = 1}^n {{g_{ji}}} $.

\begin{figure}[!t]
	\centering
	\includegraphics[width=0.5\linewidth]{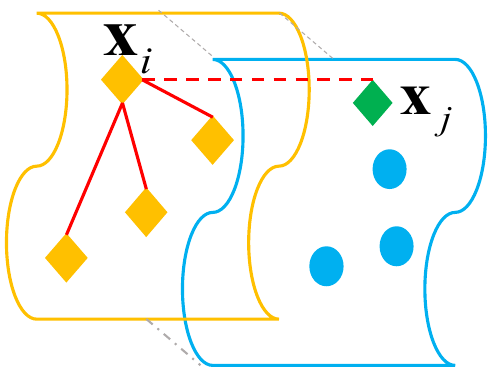}
	\caption{Schematic diagram of label and manifold structure consistency.}
	\label{KNN}
\end{figure}

In addition, our framework (\ref{model}) can be extended to its MFA form, i.e., Our-MFA. The model is formulated as:

\begin{equation}
	\begin{aligned}
		&\mathop {\min }\limits_{\mathbf{W,G}} \frac{\sum\limits_{i = 1}^n \sum\limits_{j = 1}^n \left\| \mathbf{x}^i \mathbf{W} - \mathbf{x}^j \mathbf{W} \right\|_2^2 s_{w(ij)}}{\sum\limits_{i = 1}^n \sum\limits_{j = 1}^n \left\| \mathbf{x}^i \mathbf{W} - \mathbf{x}^j \mathbf{W} \right\|_2^2 s_{b(ij)}} \\
		& \text{s.t. } \mathbf{G} \in Ind, \quad \mathbf{W}^T \mathbf{W} = \mathbf{I}
	\end{aligned}
\end{equation}
where $\mathbf{S}_w$ and $\mathbf{S}_b$ are defined as:

	$\begin{aligned}
		s_{w(ij)} = \left\{ \begin{array}{ll}
			\left\langle \mathbf{z}_i, \mathbf{z}_j \right\rangle, & \text{if } \mathbf{x}^i \in \mathcal{N}_k(\mathbf{x}^j) \text{ or } \mathbf{x}^j \in \mathcal{N}_k(\mathbf{x}^i); \\
			0, & \text{otherwise}.
		\end{array} \right.
	\end{aligned}$\\
	
	$\begin{aligned}
		s_{b(ij)} = \left\{ \begin{array}{ll}
			1 - \left\langle \mathbf{z}_i, \mathbf{z}_j \right\rangle, & \text{if } \mathbf{x}^i \in \mathcal{N}_k(\mathbf{x}^j) \text{ or } \mathbf{x}^j \in \mathcal{N}_k(\mathbf{x}^i); \\
			0, & \text{otherwise}.
		\end{array} \right.
	\end{aligned}$

\subsection{Optimization}
Taking Our-LPP as an example, the optimization of Eq.~(\ref{model11}) focuses on the solution of \textbf{W} and \textbf{G}.

$\bullet$ \textbf{Update \textbf{W} with fixed \textbf{G}:} In this case, the sub-problem for \textbf{W} can be formulated as follows:
\begin{equation}\label{W}
	\begin{aligned}
		& \mathop{\min}\limits_{\mathbf{W}} \sum\limits_{i=1}^n \sum\limits_{j=1}^n \left\| \mathbf{x}^i\mathbf{W} - \mathbf{x}^j\mathbf{W} \right\|_2^2 s_{ij} \\
		& \text{s.t.} \quad \mathbf{W}^T \mathbf{X}^T \mathbf{D}_S \mathbf{X} \mathbf{W} = \mathbf{I}
	\end{aligned}
\end{equation}

As derived in \cite{HeN03}, Eq.~(\ref{W}) can be rewritten as:
\begin{equation}\label{eqW1}
	\begin{aligned}
		& \mathop{\min}\limits_{\mathbf{W}} \, \text{tr}(\mathbf{W}^T \mathbf{X}^T \mathbf{L}_S \mathbf{X} \mathbf{W}) \\
		& \text{s.t.} \quad \mathbf{W}^T \mathbf{X}^T \mathbf{D}_S \mathbf{X} \mathbf{W} = \mathbf{I}
	\end{aligned}
\end{equation}
where $\mathbf{D}_S$ is a diagonal matrix with ${d_S}_{ii} = \sum\limits_{j=1}^n s_{ij}$, and $\mathbf{L}_S = \mathbf{D}_S - \mathbf{S}$.

Further, Eq.~(\ref{eqW1}) can be transformed to solve:
\begin{equation}\label{xx}
	\begin{aligned}
		\mathop{\min}\limits_{\mathbf{W}^T \mathbf{W} = \mathbf{I}} \, \text{tr}(\mathbf{W}^T \mathbf{X}^T (\mathbf{L}_S - \eta \mathbf{D}_S) \mathbf{X} \mathbf{W})
	\end{aligned}
\end{equation}
where $\eta$ is an adjustable parameter. The optimal value of \textbf{W} is then composed of the eigenvectors corresponding to the first $m$ smallest eigenvalues of $\mathbf{X}^T (\mathbf{L}_S - \eta \mathbf{D}_S) \mathbf{X}$. Algorithm \ref{A1} presents the pseudo-code for solving the problem (\ref{W}).

\begin{algorithm}[ht]
	\caption{Pseudo-Code for Solving Problem (\ref{W})}
	\begin{algorithmic}[1]\label{A1}
		\REQUIRE Data matrix ${\mathbf{X}}\in \mathbb{R}^{n\times d}$; Cluster assignment \\ matrix $\mathbf{G} \in \mathbb{R}^{n\times c}$; Number of neighbors $k$.\\
		\ENSURE Projection matrix $\mathbf{W}$.\\
		\WHILE{\emph{not converge}}
		\STATE Determine $\mathbf{S}$ ;
		\STATE Take the eigenvectors corresponding to the first $m$ smallest eigenvalues of ${\mathbf{X}}^T({{\mathbf{L}}_S} - \eta {{\mathbf{D}}_S}){\mathbf{X}}$ to form \\
		\textbf{W};\\
		\ENDWHILE
		\STATE \textbf{return}: The projection matrix $\mathbf{W}$.
	\end{algorithmic}
\end{algorithm}

$\bullet$ \textbf{Update \textbf{G} with fixed \textbf{W}: } The sub-problem for \textbf{G} can be formulated as follows:
\begin{equation}\label{derive_F}
	\begin{aligned}
		\mathop {\min }\limits_{{\mathbf{G}} \in Ind} \sum\limits_{i = 1}^n \sum\limits_{j = 1}^n \left\| {{{\mathbf{x}}^i\textbf{W}} - {{\mathbf{x}}^j\textbf{W}}} \right\|_2^2 s_{ij}
	\end{aligned}
\end{equation}
When solving \textbf{G}, it can be further expressed as
\begin{equation}\label{model1}
	\begin{aligned}
		&\mathop{\min }\limits_{{\mathbf{G}} \in Ind}\sum\limits_i \sum\limits_j {d_{ij}} \langle \textbf{g}_i, \textbf{g}_j \rangle = \mathop{\min }\limits_{{\mathbf{G}} \in Ind} \text{tr}(\textbf{G}^T\textbf{D}\textbf{G}\textbf{P}^{-1})
	\end{aligned}
\end{equation}
where $d_{ij} = \left\| {{{\mathbf{x}}^i}{\mathbf{W}} - {{\mathbf{x}}^j}{\mathbf{W}}} \right\|_2^2$.

The model (\ref{model1}) is difficult to solve, so to optimize it and ensure class balance after clustering, we introduce Theorem \ref{theorem4}.
\begin{theorem}\label{theorem4}
	Given $b_1 + b_2 + \ldots + b_c = N$, where $b_j \geq 0$ represents the number of samples in the j-th cluster, Eq.~(\ref{the4}) attains its maximum value when $b_1 = b_2 = \ldots = b_c$. In this case, $\bf{G}$ is discrete and presents a balanced class distribution.
	\begin{equation}\label{the4}
		\begin{aligned}
			&\mathop {\rm{max}}\limits_{\bf{G} \geq 0, \bf{G}\bf{1} = \bf{1}}  \left\| {\bf{G}^T} \right\|_{2,1}
		\end{aligned}
	\end{equation}
\end{theorem}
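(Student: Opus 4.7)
The plan is to decompose the argument into two cleanly separated steps: a convex-maximization step that forces the optimizer to be an indicator matrix, and then a concavity step (Jensen-type) that forces the column sums to be equal. The key observation is that, unwinding the notation,
\[
\|\mathbf{G}^T\|_{2,1} \;=\; \sum_{j=1}^{c} \|\mathbf{g}_j\|_2,
\]
where $\mathbf{g}_j$ denotes the $j$-th column of $\mathbf{G}$. For a candidate $\mathbf{G}$ with $\mathbf{G}\geq 0$ and $\mathbf{G}\mathbf{1}=\mathbf{1}$, the column sum $\mathbf{1}^T\mathbf{g}_j$ equals $b_j$, the size of cluster $j$.

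First I would establish that the maximum is attained at a discrete $\mathbf{G}$. The objective $\mathbf{G}\mapsto\sum_j\|\mathbf{g}_j\|_2$ is a sum of norms, hence convex in $\mathbf{G}$. The feasible set $\{\mathbf{G}\in\mathbb{R}^{N\times c}:\mathbf{G}\geq 0,\ \mathbf{G}\mathbf{1}=\mathbf{1}\}$ is a compact polytope whose extreme points are exactly the indicator matrices (each row is a vertex of the probability simplex, i.e., a standard basis vector). Since a convex function on a compact convex set attains its maximum at an extreme point, there exists an optimal $\mathbf{G}$ that is discrete. This is precisely the clause in the theorem asserting that the optimizer is an indicator assignment.

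Next I would reduce the remaining problem to a one-dimensional concavity statement. For an indicator $\mathbf{G}$, the entries of $\mathbf{g}_j$ are $0/1$ and the number of $1$'s is $b_j$, so $\|\mathbf{g}_j\|_2=\sqrt{b_j}$. The problem collapses to
\[
\max_{b_1,\dots,b_c\geq 0}\ \sum_{j=1}^{c}\sqrt{b_j}\quad\text{subject to}\quad\sum_{j=1}^{c}b_j=N.
\]
Because $t\mapsto\sqrt{t}$ is strictly concave on $[0,\infty)$, Jensen's inequality yields
\[
\frac{1}{c}\sum_{j=1}^{c}\sqrt{b_j}\;\leq\;\sqrt{\frac{1}{c}\sum_{j=1}^{c}b_j}\;=\;\sqrt{N/c},
\]
with equality iff all $b_j$ are equal. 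Hence $\sum_j\sqrt{b_j}\leq\sqrt{cN}$, attained exactly when $b_1=\cdots=b_c=N/c$. A Lagrange-multiplier check gives the same stationarity condition $\tfrac{1}{2\sqrt{b_j}}=\lambda$, which forces all $b_j$ to coincide, confirming the conclusion.

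The only delicate point I anticipate is the first step: one must be explicit that the maximum of a convex objective on a polytope lies at a vertex and that the vertices of $\{\mathbf{G}\geq 0,\mathbf{G}\mathbf{1}=\mathbf{1}\}$ are precisely the row-wise indicator matrices. Once that is in place, the rest is a one-line concavity argument. A minor caveat worth mentioning is the integrality of $b_j$: if $c$ does not divide $N$, the continuous optimum $N/c$ is not integer-feasible, so the statement should be read as ``as balanced as possible,'' and the same concavity argument shows the optimum is achieved when the $b_j$ differ by at most one.
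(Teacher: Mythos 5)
Your proof is correct, and it reorganizes the argument in a way that is actually tighter than the paper's. The paper proceeds in the opposite order: it first applies Cauchy--Schwarz to the vector $\mathbf{b}=(\lVert\mathbf{g}_1\rVert_2,\dots,\lVert\mathbf{g}_c\rVert_2)$ to get $\sum_j b_j\le\sqrt{c}\,\lVert\mathbf{b}\rVert_2$ with equality iff the $b_j$ coincide, and then separately maximizes $\lVert\mathbf{b}\rVert_2^2=\sum_{i,j}g_{ij}^2$ row by row over the simplex constraints to conclude that $\mathbf{G}$ must be an indicator matrix. Note that your Jensen step and the paper's Cauchy--Schwarz step are the same inequality in disguise (apply Cauchy--Schwarz to $(\sqrt{b_j})$ and the all-ones vector), so the ``balance'' half of the two proofs is essentially identical. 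Where you genuinely differ is the discreteness half: you invoke convexity of the sum-of-norms objective over the product of simplices and the fact that a convex function attains its maximum at an extreme point, which yields a discrete maximizer of the \emph{actual} objective in one clean step. The paper instead maximizes the Cauchy--Schwarz \emph{upper bound} and then argues the bound is tight; strictly speaking this leaves an implicit step (one must check that the configuration maximizing $\lVert\mathbf{b}\rVert_2$ and the configuration achieving equality in Cauchy--Schwarz are simultaneously attainable), which your decomposition avoids entirely. You also flag the integrality caveat when $c\nmid N$, which the paper silently ignores; that is a legitimate refinement of the statement rather than a defect in your argument.
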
	

\begin{proof}
	\begin{equation}\label{L21}
		\begin{aligned}
			\left\| {\bf{G}^T} \right\|_{2,1} = \sum\nolimits_j \left( g_j^T g_j \right)^{1/2} =  \sum\nolimits_{j = 1}^c b_j
		\end{aligned}
	\end{equation}
	where $	b_j = \left( \sum\nolimits_i g_{ij}^2 \right)^{1/2} $
	
	Let $\mathbf{b} = [b_1, b_2, \ldots ,b_c]^T \in \mathbb{R}^{c\times 1}$ and $\mathbf{q} = [1, 1, \ldots ,1]^T \in \mathbb{R}^{c\times 1}$. According to the Cauchy-Schwarz inequality, 
	\begin{equation}\label{L3}
		\begin{aligned}
			\left| \left\langle \bf{b}, \bf{q} \right\rangle \right| &\le \left\| \bf{b} \right\|_2 \left\| \bf{q} \right\|_2 \\
			\Rightarrow \sum\limits_j b_j &\le \left\| \bf{b} \right\|_2 \left( 1 + 1 + \ldots + 1 \right)^{1/2} \\
			\Rightarrow \sum\limits_j b_j &\le \left\| \bf{b} \right\|_2 {c}^{1/2}
		\end{aligned}
	\end{equation}
	The equality holds if and only if $b_1 = b_2 = \ldots = b_c$. 
	
Considering,
	\begin{equation}\label{L6}
		\begin{aligned}
			\max \left\| \bf{b} \right\|_2 \Rightarrow \max \left\| \bf{b} \right\|_2^2 = \max \sum\limits_i \sum\limits_j g_{ij}^2
		\end{aligned}
	\end{equation}
	
	Obviously, all rows of $\bf{G}$ are independent, thus
	\begin{equation}\label{L7}
		\begin{aligned}
			\mathop {\rm{max}}\limits_{ 0 \le g_{ij} \le 1, \sum\limits_j g_{ij} = 1}  \sum \limits_{j = 1}^c g_{ij}^2,
			\forall i
		\end{aligned}
	\end{equation}
	
	The solution to the maximization problem (\ref{L7}) is achieved when $\bf{g}_i$ has only one element equal to 1 and the rest are 0, and the maximum value is 1. Thus, the problem (\ref{L7}) only reaches a maximum when $\bf{G}$ is a discrete label matrix.
	
	In this case, ${\bf{P} = \bf{G}^T \bf{G}} \in \mathbb{R}^{c\times c}$ is a diagonal matrix whose i-th diagonal element is the number of samples in the i-th cluster, hence:
	\begin{equation}\label{L8}
		\begin{aligned}
			b_j = \sqrt{g_j^T g_j} = \sqrt{n_i}
		\end{aligned}
	\end{equation}
	where $n_i$ is the number of samples in the j-th cluster.
	
	According to $b_1 = b_2 = \ldots = b_c$, we have $\sqrt{n_1} = \sqrt{n_2} = \ldots = \sqrt{n_c}$.
	\quad\quad\quad\quad\quad\quad\quad\quad\quad\quad\quad\quad\quad\quad\quad$\square$
\end{proof}

According to Theorem \ref{theorem4}, we convert Eq.~(\ref{model1}) to

\begin{equation}\label{F}
	\begin{aligned}
		\mathop {\rm{min}}\limits_{\textbf{G} \geq 0, \textbf{G}\textbf{1} = \textbf{1}} \text{tr}(\textbf{G}^T \textbf{D} \textbf{G}) - \beta \left\| \textbf{G}^T \right\|_{2,1}
	\end{aligned}
\end{equation}

To simplify the optimization process, we set $f(\textbf{G})=\| \textbf{G}^T \|_{2,1}$ and perform a first-order Taylor expansion of $f(\textbf{G})$ at $\textbf{G}^{(t)}$, yielding:
\begin{equation}\label{Taylor}
	f(\textbf{G}) = f(\textbf{G}^{(t)}) + \langle \nabla f(\textbf{G}^{(t)}), \textbf{G} - \textbf{G}^{(t)} \rangle,
\end{equation}
where $\textbf{G}^{(t)}$ is the solution at iteration $t$, and $\nabla f(\textbf{G}^{(t)})$ is the derivative of $\| \textbf{G}^T \|_{2,1}$.

Denote the derivative of $\| \textbf{G}^T \|_{2,1}$ by $\textbf{H}$, we have
\begin{equation}\label{H}
	\textbf{H} = \frac{\partial {\left\| \textbf{G}^T \right\|_{2,1}}}{\partial \textbf{G}} =  \textbf{G} \cdot \text{diag}\left(\frac{1}{{\left\| \textbf{g}_1 \right\|}}, \ldots, \frac{1}{{\left\| \textbf{g}_c \right\|}}\right).
\end{equation}

Neglecting constants in Eq.~(\ref{Taylor}), we can solve Eq.~(\ref{F}) iteratively by:
\begin{equation}\label{Taylor_solve}
	\begin{aligned}
		\textbf{G}^{(t+1)} 
		&= \mathop{\text{argmin}}\limits_{\textbf{G}} \text{tr}(\textbf{G}^T \textbf{D} \textbf{G}) - \beta \text{tr}(\textbf{H}^T \textbf{G})
	\end{aligned}
\end{equation}

Thus, we approximate Eq.~(\ref{F}) by Eq.~(\ref{au}), and $\textbf{G}$ is updated by solving the following problem:
\begin{equation}\label{au}
	\begin{aligned}
		\min_{\textbf{G} \cdot \textbf{1}=\textbf{1}, \textbf{G} \geqslant \mathbf{0}} \text{tr}(\textbf{G}^T \textbf{D} \textbf{G}) - \beta \text{tr}(\textbf{H}^T \textbf{G}).
	\end{aligned}
\end{equation}

Let $\mathbf{G} = \begin{bmatrix}
	\mathbf{g}^{i} \\
	\mathbf{G}_0 \\
\end{bmatrix}$, $\mathbf{D} = \begin{bmatrix}
	d_{ii} &  \mathbf{d}_{i0}^{T}  \\
	\mathbf{d}_{i0} & \mathbf{D}_0  \\
\end{bmatrix}$, where $\mathbf{G}_0 \in \mathbb{R}^{(N-1) \times K}$, $\mathbf{d}_{i0} \in \mathbb{R}^{(N-1) \times 1}$, and $\mathbf{D}_0 \in \mathbb{R}^{(N-1) \times (N-1)}$. Similarly, $\mathbf{H} = \begin{bmatrix}
\mathbf{h}^{i} \\
\mathbf{H}_0 \\
\end{bmatrix}$. We have:
\begin{equation}
	\begin{aligned}
		&\textbf{G}^T \textbf{D} \textbf{G} -  \beta \textbf{H}^T \textbf{G} \\
		&= \left(\begin{bmatrix}
			(\mathbf{g}^{i})^T & (\mathbf{G}_0)^T \\
		\end{bmatrix}
		\begin{bmatrix}
			d_{ii} &  \mathbf{d}_{i0}^T  \\
			\mathbf{d}_{i0} & \mathbf{D}_0  \\
		\end{bmatrix}  
		-  \beta \begin{bmatrix}
			(\mathbf{h}^{i})^T & (\mathbf{H}_0)^T \\
		\end{bmatrix}\right)
		\begin{bmatrix}
			\mathbf{g}^{i} \\
			\mathbf{G}_0 \\
		\end{bmatrix} \\
		&= (\mathbf{g}^{i})^T d_{ii} \mathbf{g}^{i} + (\mathbf{G}_0)^T \mathbf{d}_{i0} \mathbf{g}^{i} + 
		(\mathbf{g}^{i})^T \mathbf{d}_{i0}^T \mathbf{G}_0 \\
		&\quad\quad + (\mathbf{G}_0)^T \mathbf{D}_0 \mathbf{G}_0 - \beta \left((\mathbf{h}^{i})^T \mathbf{g}^{i} + (\mathbf{H}_0)^T \mathbf{G}_0 \right).
	\end{aligned}
\end{equation}

Removing terms unrelated to $\mathbf{g}^i$, and using the properties of the trace operation, we have:
\begin{equation}
	\begin{aligned}
		&\text{tr}\left(\textbf{G}^T \textbf{D} \textbf{G} -  \beta \textbf{H}^T \textbf{G}\right) \\
		&\quad\quad= \text{tr}\left((\mathbf{g}^{i})^T d_{ii} \mathbf{g}^{i} +  2 \mathbf{g}^{i} \mathbf{G}_0^T \mathbf{d}_{i0} - \beta \mathbf{g}^{i} (\mathbf{h}^{i})^T \right) \\
		&\quad\quad= \mathbf{g}^{i} (\mathbf{g}^{i})^T d_{ii} +  \mathbf{g}^{i} \mathbf{f},
	\end{aligned}
\end{equation}
where $\mathbf{f} = 2 \mathbf{G}_0^T \mathbf{d}_{i0} - \beta (\mathbf{h}^{i})^T$.

Thus, the problem of updating the $i$-th row of $\textbf{G}$ becomes:
\begin{equation}\label{SolveYFinal2}
	\begin{aligned}
		& \mathop{\textrm{min}}\limits_{\mathbf{g}^{i} \cdot \textbf{1} = \textbf{1}} \mathbf{g}^{i} (\mathbf{g}^{i})^T d_{ii} +  \mathbf{g}^{i} \mathbf{f}.
	\end{aligned}
\end{equation}

As $d_{ii} = 0 \; (i = 1, 2, \dots, N)$, Eq.~(\ref{SolveYFinal2}) reduces to:
\begin{equation}\label{SolveYFinal}
	\begin{aligned}
		\mathop{\textrm{min}}\limits_{\mathbf{g}^i }  \mathbf{g}^{i} \left(2 \mathbf{G}^T \mathbf{d}_{i} - \beta (\mathbf{h}^{i})^T \right)
	\end{aligned}
\end{equation}
where $\mathbf{d}_i$ is the $i$-th column of $\textbf{D}$, with $d_{ii} = 0$. $\textbf{G}$ denotes the solution before $\mathbf{g}^i$ is updated. Then, the solution of $\mathbf{g}^i$ can be written as:
\begin{equation}\label{solveF}
	\begin{aligned}
		g_{ib} = \begin{cases}
			1, & b = \mathop{\textrm{arg min}}\limits_{j} \left(2 \mathbf{G}^T \mathbf{d}_{i} - \beta (\mathbf{h}^{i})^T \right)_j \\
			0, & \textrm{otherwise}.
		\end{cases}
	\end{aligned}
\end{equation}

Algorithm \ref{A2} concludes the pseudo-code for solving the problem (\ref{F}).
\begin{algorithm}[ht]
	\caption{Pseudo-Code for Solving Problem (\ref{F})}
	\begin{algorithmic}[1]\label{A2}
		\REQUIRE Data matrix ${\mathbf{X}}\in \mathbb{R}^{n\times d}$; Projection matrix ${\mathbf{W}}\in \mathbb{R}^{d\times m}$; Number of clusters $c$.\\
		\ENSURE Cluster assignment matrix $\mathbf{G}$.\\
		\WHILE{\emph{not converge}}
		\STATE  Update  $\mathbf{H}$ by Eq.~\eqref{H}; 
		\STATE Update $\mathbf{G}$ by (\ref{solveF}) row by row;\\
		\ENDWHILE
		\STATE \textbf{return}: The cluster assignment matrix $\mathbf{G}$.
	\end{algorithmic}
\end{algorithm}

Finally, we conclude the pseudo-code on Algorithm \ref{A3}.
\begin{algorithm}[ht]
	\caption{Pseudo-Code for our algorithm}
	\begin{algorithmic}[1]\label{A3}
		\REQUIRE Data matrix ${\mathbf{X}}\in \mathbb{R}^{n\times d}$; Number of neighbors \\ $k$; Number of clusters $c$.\\
		\ENSURE Cluster assignment matrix $\mathbf{G}$.\\
		\STATE \textbf{Initialize}: $\mathbf{G}$.\\
		\WHILE{\emph{not converge}}
		\STATE Update $\mathbf{W}$ by Algorithm \ref{A1};\\
		\STATE Update $\mathbf{G}$ by Algorithm \ref{A2};\\
		\ENDWHILE
		\STATE \textbf{return}: The cluster assignment matrix $\mathbf{G}$.
	\end{algorithmic}
\end{algorithm}

\textbf{Time Complexity Analysis:} 
The time complexity of solving \textbf{W} using eigen-decomposition is $\bm{\mathcal{O}}(d^3)$. 
For \textbf{G}, the complexity is primarily due to the computation of the distance matrix \textbf{D} and the optimization of \textbf{H} and \textbf{G} row by row. The computation of \textbf{D} has a complexity of $\bm{\mathcal{O}}(n^2dm)$. Optimizing \textbf{H} requires $\bm{\mathcal{O}}(nc^2)$ according to Eq.~\eqref{H}, and solving \textbf{G} row by row has a complexity of $\bm{\mathcal{O}}(n^2)$. 
Thus, the overall time complexity of Algorithm \ref{A3} is $\bm{\mathcal{O}}(t_3(t_1d^3 + t_2n^2dm))$, where $t_1$ represents the number of iterations of Algorithm \ref{A1}, $t_2$ denotes the number of iterations of Algorithm \ref{A2}, and $t_3$ corresponds to the number of iterations of Algorithm \ref{A3}.

\begin{table*}[ht]
	\centering
	\setlength{\tabcolsep}{3pt}
	\caption{ACC results on seven benchmark datasets, with the highest value represented in bold.}\label{res_acc}
	\begin{center}
		\resizebox{2.1\columnwidth}{!}{
			\begin{tabular}{l c c c c c c c c c c| c c}
				\toprule
				Datasets &K-means	&Ksum	&Ksum-x	&RKM	&CDKM	&\makecell[c]{PCA-KM} &\makecell[c]{LPP-KM} &\makecell[c]{LDA-KM} &\makecell[c]{Un-RTLDA} &\makecell[c]{Un-TRLDA} &\makecell[c]{Our-LPP}  &\makecell[c]{Our-MFA}\\
				\midrule
				AR	&0.251	&0.297	&0.245	&0.264	&0.265	&0.282	&0.449	&0.258	&0.561	&0.272	&0.560				
				&\textbf{0.601}\\
				JAFFE &0.709	&0.879	&0.893	&0.831	&0.711	&0.887	&0.977	&0.958	&0.967	&0.916		&\textbf{1.000}	&\textbf{1.000}\\
MSRC &0.605	&0.752	&0.685	&0.629	&0.666	&0.691	&0.724	&0.729	&0.671	&0.752	&0.886		&\textbf{0.929}\\	
ORL&0.520	&0.634	&0.588	&0.500	&0.551	&0.553	&0.550	&0.568	&0.663	&0.608	&\textbf{0.895}	&0.848\\
UMIST&0.434	&0.421	&0.430	&0.421	&0.421	&0.464	&0.457	&0.499	&0.497	&0.501	&0.817	&\textbf{0.837}\\	
USPS&0.649	&0.766	&0.724	&0.667	&0.642	&0.778	&0.686	&0.759	&0.744	&0.791	&0.826	&0.782\\
Yaleface&0.381	&0.434	&0.442	&0.449	&0.396	&0.473	&0.400	&0.454	&0.473	&0.485	&\textbf{0.655}		&0.588\\
				\bottomrule
		\end{tabular}}
	\end{center}
\end{table*}
\begin{table*}[ht]
	\centering
	\setlength{\tabcolsep}{3pt}
	\caption{NMI results on seven benchmark datasets, with the highest value represented in bold.}\label{res_NMI}
	\begin{center}
		\resizebox{2.1\columnwidth}{!}{
			\begin{tabular}{l c c c c c c c c c c| c c}
				\toprule
				Datasets &K-means	&Ksum	&Ksum-x	&RKM	&CDKM	&\makecell[c]{PCA-KM} &\makecell[c]{LPP-KM} &\makecell[c]{LDA-KM} &\makecell[c]{Un-RTLDA} &\makecell[c]{Un-TRLDA} &\makecell[c]{Our-LPP}  &\makecell[c]{Our-MFA}\\
				\midrule
				AR	&0.557	&0.596	&0.568	&0.575	&0.570	&0.583	&0.704	&0.555	&0.801	&0.567	&0.784	
				&\textbf{0.833}\\
				JAFFE &0.801	&0.876	&0.901	&0.816	&0.798	&0.914	&0.974	&0.948	&0.963	&0.923	&\textbf{1.000}		&\textbf{1.000}\\
				MSRC	&0.528	&0.611	&0.575	&0.561	&0.569	&0.603	&0.625	&0.584	&0.548	&0.638	&0.774		&\textbf{0.855}\\
				ORL&0.723	&0.794	&0.769	&0.714	&0.753	&0.733	&0.692	&0.746	&0.816	&0.772	&\textbf{0.929}		&0.909\\
				UMIST	&0.641	&0.619	&0.638	&0.596	&0.640	&0.625	&0.662	&0.674	&0.693	&0.685	&0.890		&\textbf{0.922}\\
				USPS&0.615	&0.662	&0.608	&0.587	&0.606	&0.667	&0.653	&0.656	&0.642	&0.676	&0.701	&0.674\\
				Yaleface	&0.439	&0.498	&0.502	&0.510	&0.478	&0.508	&0.430	&0.508	&0.521	&0.527	&\textbf{0.664}		&0.627\\
				\bottomrule
		\end{tabular}}
	\end{center}
\end{table*}
\begin{table*}[!h]
	\centering
	\setlength{\tabcolsep}{3pt}
	\caption{Purity results on seven benchmark datasets, with the highest value represented in bold.}\label{res_Purity}
	\begin{center}
			\resizebox{2.1\columnwidth}{!}{
					\begin{tabular}{l c c c c c c c c c c| c c}
							\toprule
							Datasets &K-means	&Ksum	&Ksum-x	&RKM	&CDKM	&\makecell[c]{PCA-KM} &\makecell[c]{LPP-KM} &\makecell[c]{LDA-KM} &\makecell[c]{Un-RTLDA} &\makecell[c]{Un-TRLDA} &\makecell[c]{Our-LPP}  &\makecell[c]{Our-MFA}\\
							\midrule
							AR	&0.275	&0.369	&0.324	&0.322	&0.286	&0.299	&0.493	&0.278	&0.597	&0.294	&0.591
							&\textbf{0.628}\\
							JAFFE &0.746	&0.879	&0.898	&0.831	&0.744  &0.887	&0.977	&0.958	&0.967	&0.916	&\textbf{1.000}		&\textbf{1.000}\\
							MSRC&0.628	&0.752	&0.691	&0.633	&0.680	&0.719	&0.724	&0.729	&0.691	&0.752	&0.886		&\textbf{0.922}\\
							ORL&0.571	&0.656	&0.606	&0.520	&0.609	&0.603	&0.588	&0.610	&0.698	&0.645	&\textbf{0.896}		&0.858\\
							UMIST&0.511	&0.455	&0.472	&0.440	&0.504  &0.518	&0.564	&0.553	&0.569	&0.562	&0.854		&\textbf{0.884}\\
							USPS&0.680	&0.768	&0.724	&0.683	&0.675	&0.778	&0.722	&0.759	&0.744	&0.791	&0.826		&0.782\\
							Yaleface&0.403	&0.473	&0.492	&0.485	&0.419	&0.473	&0.424	&0.473	&0.497	&0.503	&\textbf{0.661}	&0.600\\
							\bottomrule
					\end{tabular}}
		\end{center}
\end{table*}

\section{Experiments}\label{Experiments}
\subsection{Experimental Settings}
Experiments are conducted on a Windows 10 desktop computer equipped with a 2.40 GHz Intel Xeon Gold 6240R CPU, 64 GB RAM, and MATLAB R2020b (64-bit).
\subsubsection{Datasets}
We validate the clustering performance of our algorithm using seven datasets, including:
(1) AR~\cite{AR} contains 120 face classes with 3,120 images.
(2) JAFFE~\cite{JAFFE} consists of 213 images displaying different facial expressions from 10 Japanese female subjects.
(3) MSRC\_V2~\cite{WinnJ05} includes 7 object classes with 210 images. We selected the 576-D HOG feature as the single view dataset.
(4) ORL~\cite{ORL} contains 400 facial images from 40 individuals.
(5) UMIST~\cite{UMIST} consists of 564 facial images from 20 individuals.
(6) USPS~\cite{USPS} consists of 3,000 handwritten digit images for each of the 10 digits from 0 to 9.
(7) Yaleface~\cite{Yaleface} contains 165 grayscale images in GIF format from 15 individuals.

\subsubsection{Comparison Algorithms}
We selected several popular K-means algorithms, including K-means, Ksum~\cite{PeiC00023}, Ksum-x~\cite{PeiC00023}, RKM~\cite{LinHX19}, and CDKM~\cite{NieXWWLL22}. Additionally, we included some dimensionality reduction clustering algorithms, both two-step algorithms: PCA-KM~\cite{6793549}, LPP-KM~\cite{HeN03}, LDA-KM~\cite{DingL07}, and one-step algorithms: Un-RTLDA~\cite{WangWNLYW21}, Un-TRLDA~\cite{WangWNLYW21}.
\subsubsection{Evaluation Metrics}
We use three commonly used clustering evaluation metrics to assess the performance of our algorithms: Accuracy (ACC), Normalized Mutual Information (NMI), and Purity.
\subsection{Clustering Performance}

As shown in Table \ref{res_acc}, Table \ref{res_NMI}, and Table \ref{res_Purity}, the clustering algorithms with dimensionality reduction outperform those in the original space in terms of clustering performance. This suggests that dimensionality reduction plays a crucial role in clustering, likely by reducing noise and emphasizing key features.
Among dimensionality reduction techniques, the clustering performance of one-step dimensionality reduction algorithms surpasses that of two-step dimensionality reduction clustering algorithms. The two-step algorithms extract data features through dimensionality reduction and then use them for clustering; however, these features may not be ideal, affecting the clustering outcome. The performance of LPP-KM, based on manifold learning, exceeds that of PCA-KM, which relies on global linear transformation. LPP-KM retains the local manifold structure of the data, which often represents the intrinsic geometric structure of the data space better than global linear methods like PCA.
Moreover, Our-LPP and Our-MFA outperform the one-step clustering algorithm based on LDA. This superior performance is attributed to Our-LPP's ability to preserve the local manifold structure during dimensionality reduction, allowing for a more accurate representation of inherent data clustering. Our-MFA leverages the boundary Fisher criterion to find the optimal projection direction, resulting in more compact homogeneous and dispersed heterogeneous neighborhood samples. Considering boundary information helps maintain data separability after dimensionality reduction, thereby enhancing the clustering effect.

\subsection{Parameter Analysis}

For simplicity, we conducted a detailed experimental analysis on Our-LPP.
We conducted experiments on the parameter $k$ for ORL, MSRC, UMIST, and Yaleface to verify the effectiveness of determining sample similarity based on labels and neighborhood relationships. Specifically, Figure \ref{figk} visually depicts the changes in clustering performance as $k$ varies from $2$ to $n/c$. The results indicate that the clustering performance for ORL, MSRC, and Yaleface typically improves as $k$ increases, reaching a peak slightly below $n/c$ before declining again. For UMIST, the clustering performance peaks at $k=2$ and gradually decreases as $k$ increases. A higher $k$ value captures more neighborhood information, but if $k$ becomes too large, the neighborhood range expands excessively, causing samples from different classes to group together, which leads to a drop in performance.

\begin{figure}[ht]
	\centering
	\begin{subfigure}{0.47 \linewidth}
		\includegraphics[width=1.0\linewidth]{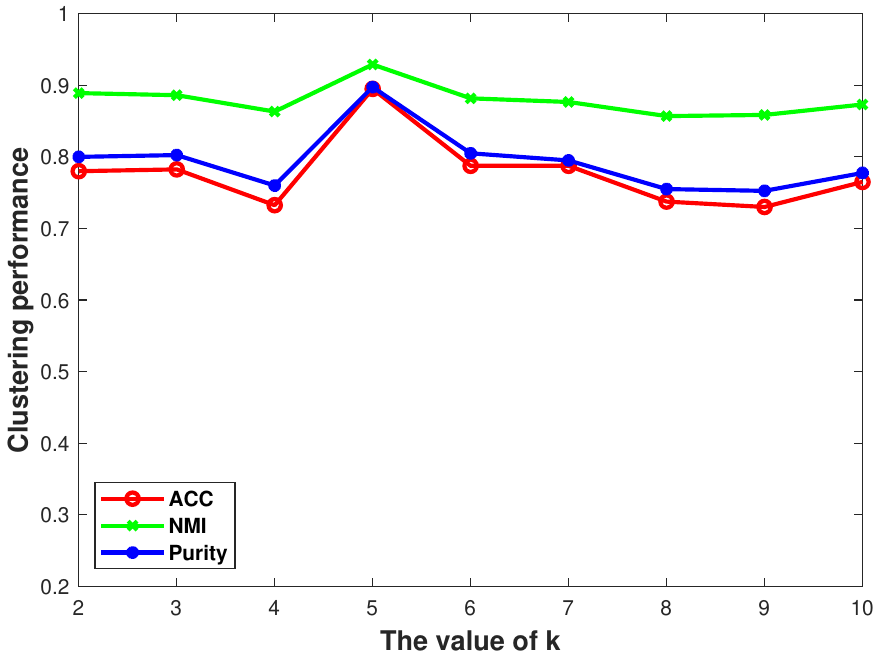}
		\caption{ORL}
	\end{subfigure}
	\begin{subfigure}{0.47 \linewidth}
		\includegraphics[width=1.0\linewidth]{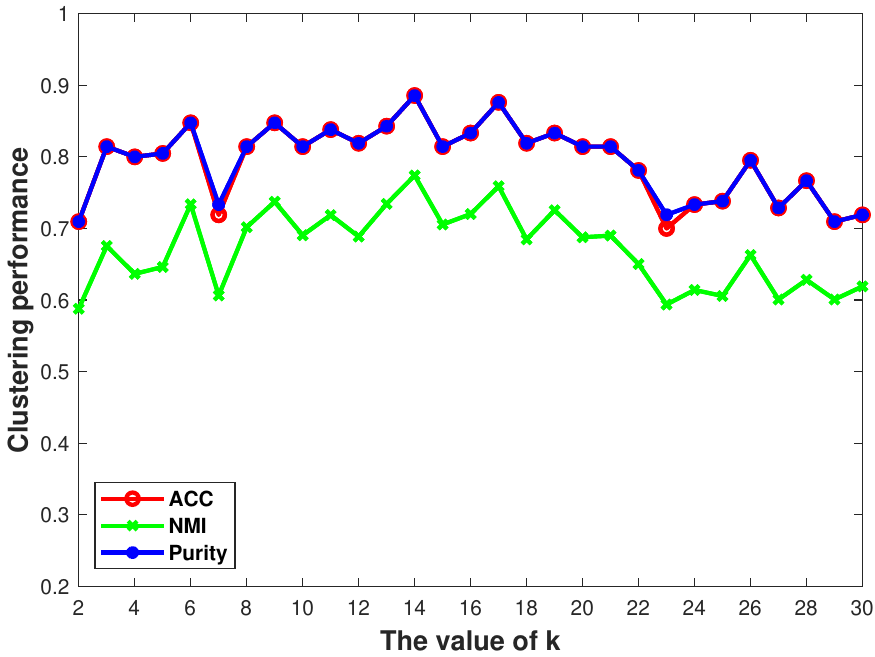}
		\caption{MSRC}
	\end{subfigure}
	\begin{subfigure}{0.47 \linewidth}
		\includegraphics[width=1.0\linewidth]{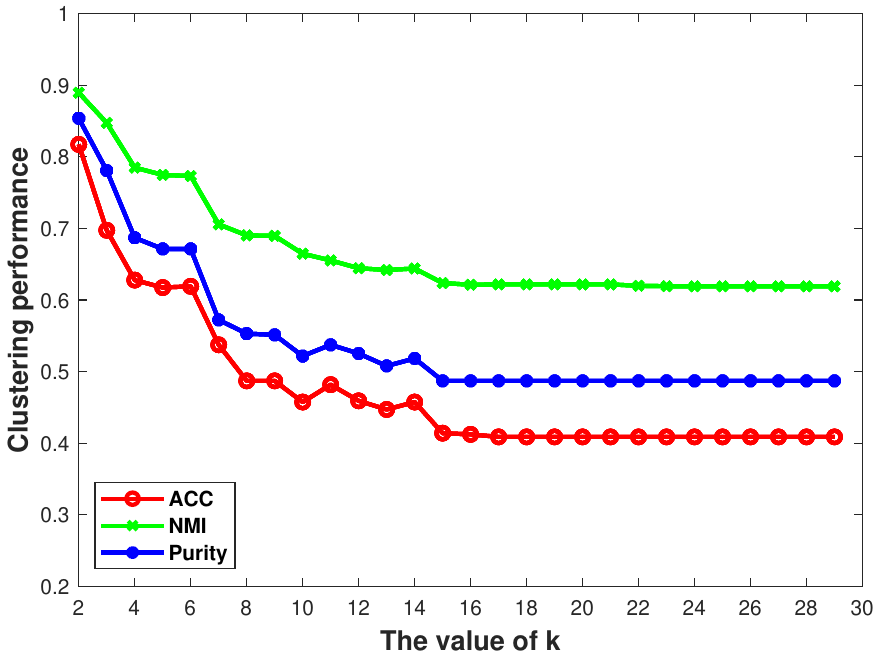}
		\caption{UMIST}
	\end{subfigure}
	\begin{subfigure}{0.47 \linewidth}
		\includegraphics[width=1.0\linewidth]{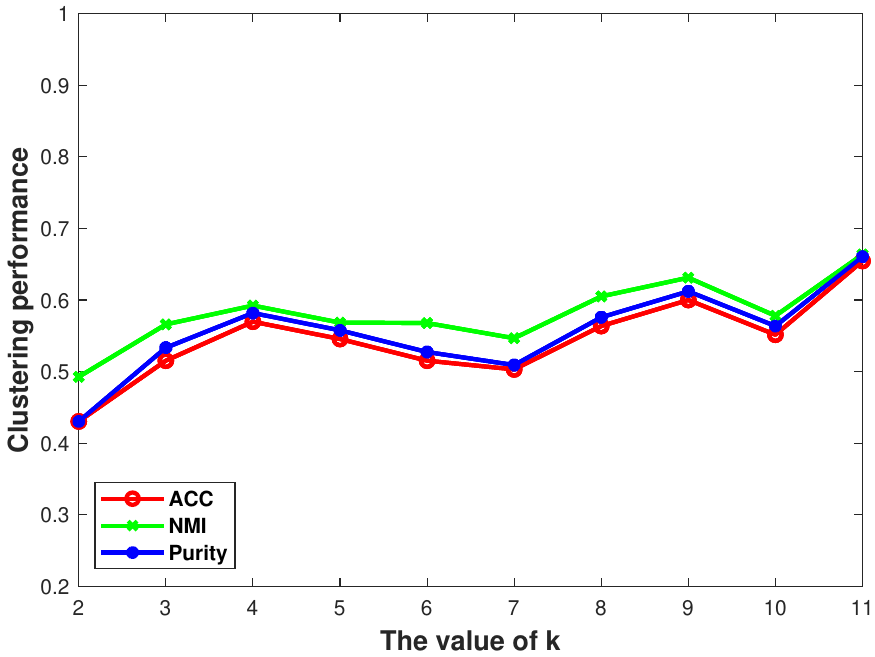}
		\caption{Yaleface}
	\end{subfigure}
	\caption{The effect of $k$ on clustering performance.}
	\label{figk}
\end{figure}
To assess the impact of dimensionality reduction on clustering accuracy, we conducted experiments using the ORL, MSRC, UMIST, and Yaleface datasets. These experiments evaluated how varying the number of dimensions affects clustering outcomes, as illustrated in Figure \ref{figdim}. 

The results indicate that as the data dimensionality increases, clustering performance initially improves gradually. However, beyond a certain point, further increases in dimensionality lead to either stabilization or a decline in clustering effectiveness. This suggests that insufficient dimensionality may result in inadequate feature extraction, thereby constraining the performance of clustering algorithms. Conversely, excessively high dimensionality can introduce redundancy or noise, adversely affecting the clustering process.

\begin{figure}[ht]
	\centering
	\begin{subfigure}{0.47 \linewidth}
		\includegraphics[width=1.0\linewidth]{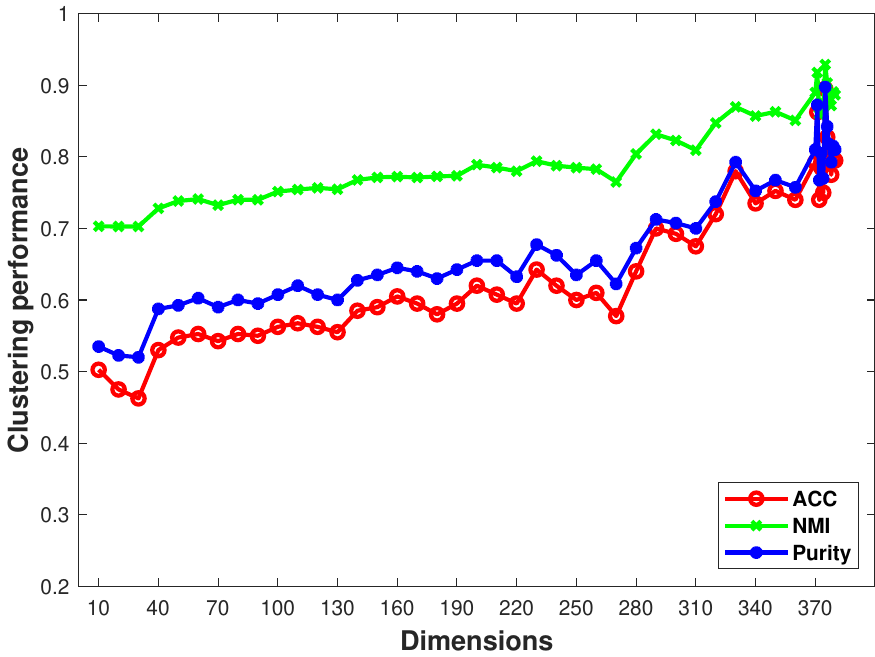}
		\caption{ORL}
	\end{subfigure}
	\begin{subfigure}{0.47 \linewidth}
		\includegraphics[width=1.0\linewidth]{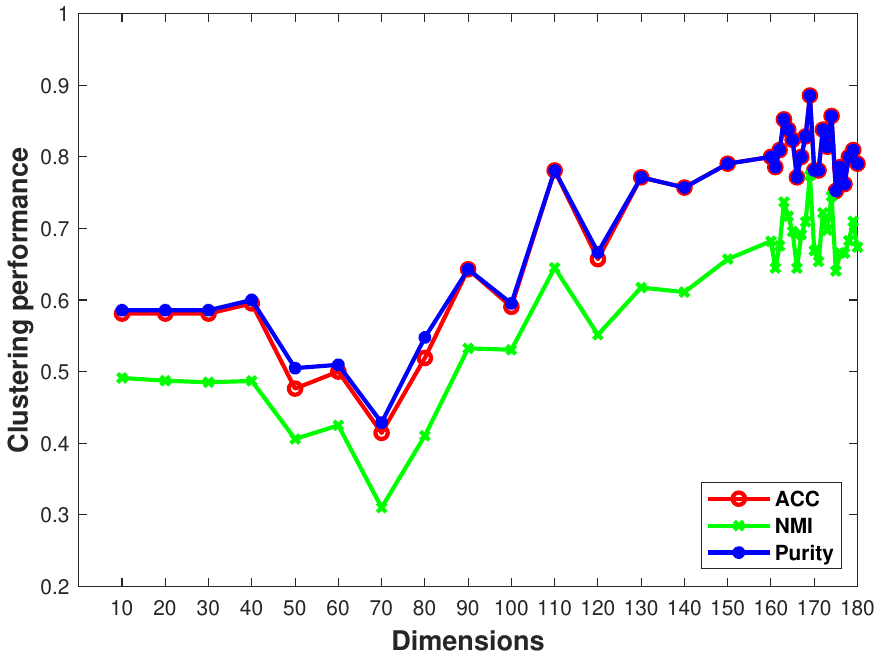}
		\caption{MSRC}
	\end{subfigure}
	\begin{subfigure}{0.47 \linewidth}
		\includegraphics[width=1.0\linewidth]{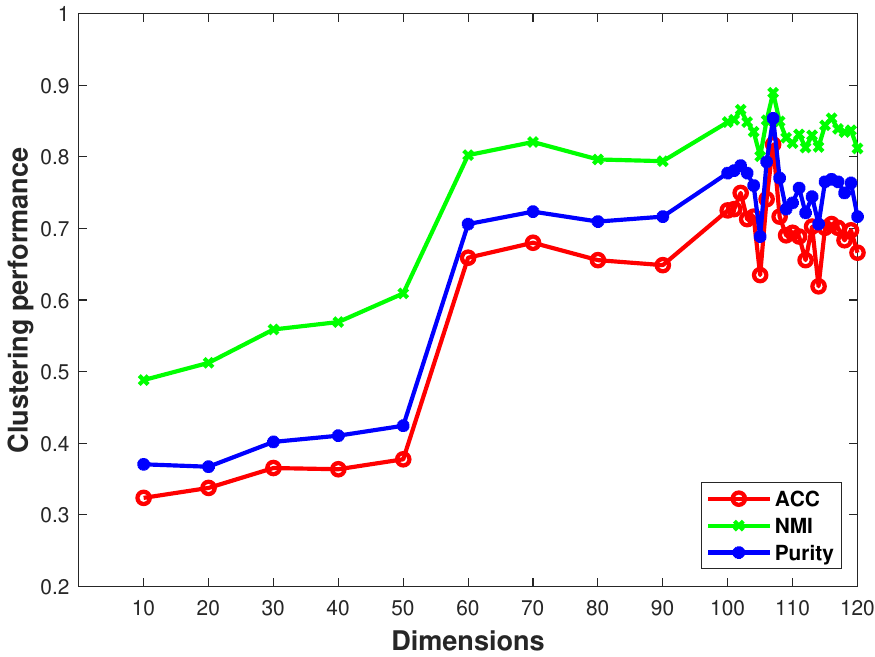}
		\caption{UMIST}
	\end{subfigure}
	\begin{subfigure}{0.47 \linewidth}
		\includegraphics[width=1.0\linewidth]{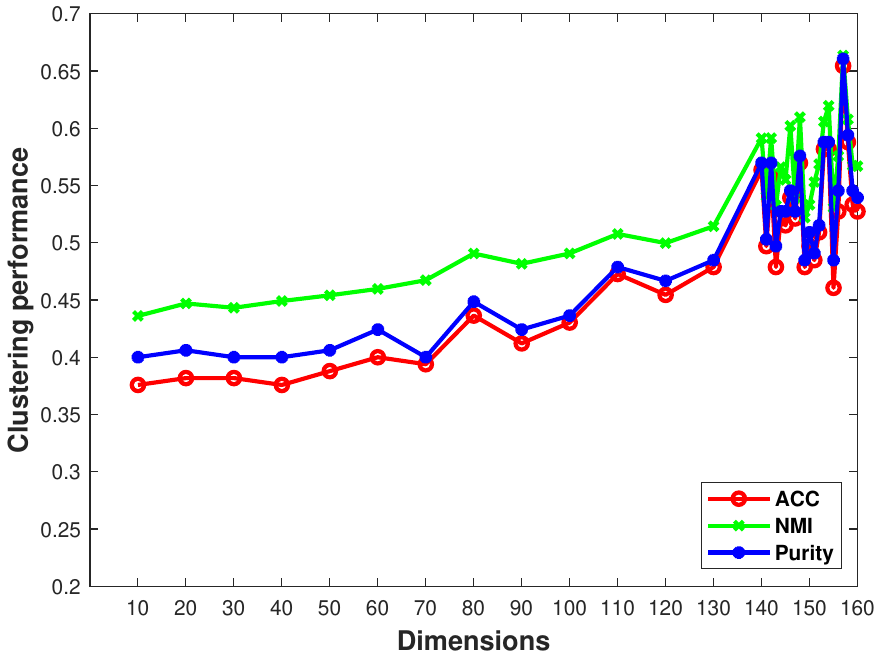}
		\caption{Yaleface}
	\end{subfigure}
	\caption{The effect of dimension on clustering performance.}
	\label{figdim}
\end{figure}

\subsection{Verification of Convergence}
In this section, we verify the convergence of Our-LPP. Figure \ref{figcon} shows the clustering performance and the objective function value as a function of the number of iterations for the ORL, MSRC, UMIST, and Yaleface datasets. As observed, the clustering performance gradually improves, and the objective function value steadily decreases with an increasing number of iterations. After a few iterations, the objective function value and the clustering performance stabilize, indicating convergence. Furthermore, our algorithm converges quickly, demonstrating its strong convergence properties.
\begin{figure}[!ht]
	\centering
	\begin{subfigure}{0.47 \linewidth}
		\includegraphics[width=1.0\linewidth]{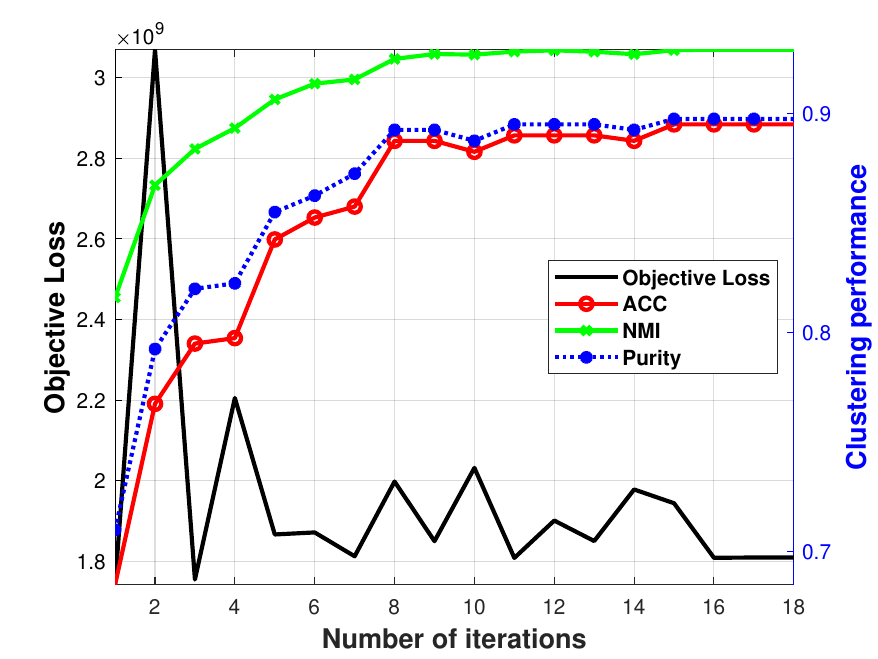}
		\caption{ORL}
	\end{subfigure}
	\begin{subfigure}{0.47 \linewidth}
		\includegraphics[width=1.0\linewidth]{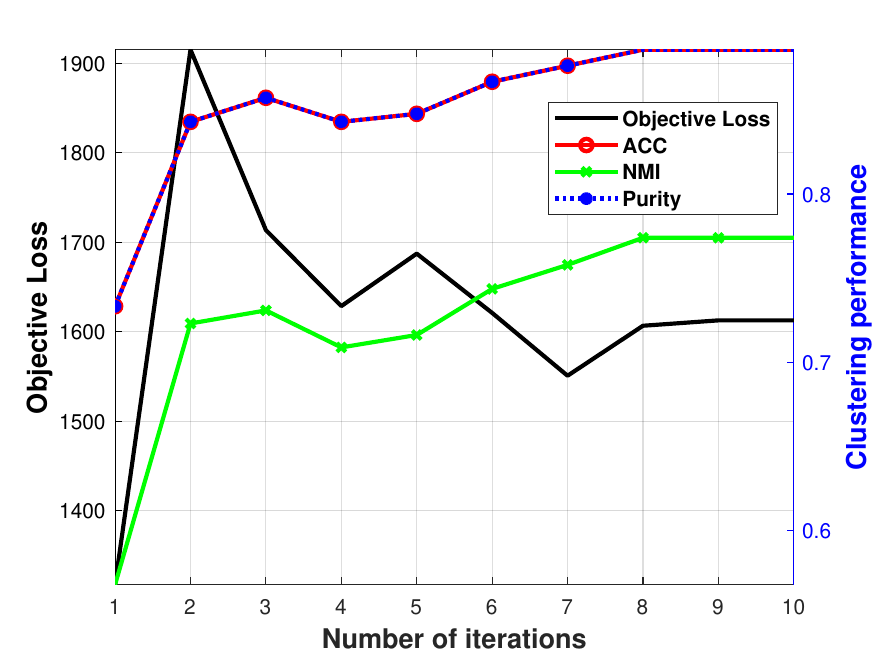}
		\caption{MSRC}
	\end{subfigure}
	\begin{subfigure}{0.47 \linewidth}
		\includegraphics[width=1.0\linewidth]{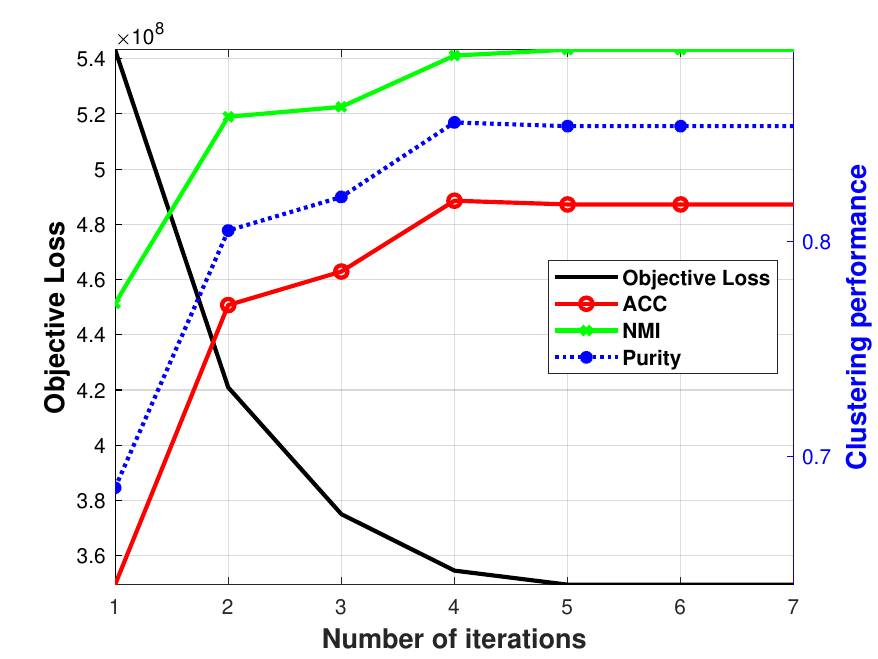}
		\caption{UMIST}
	\end{subfigure}
	\begin{subfigure}{0.47 \linewidth}
		\includegraphics[width=1.0\linewidth]{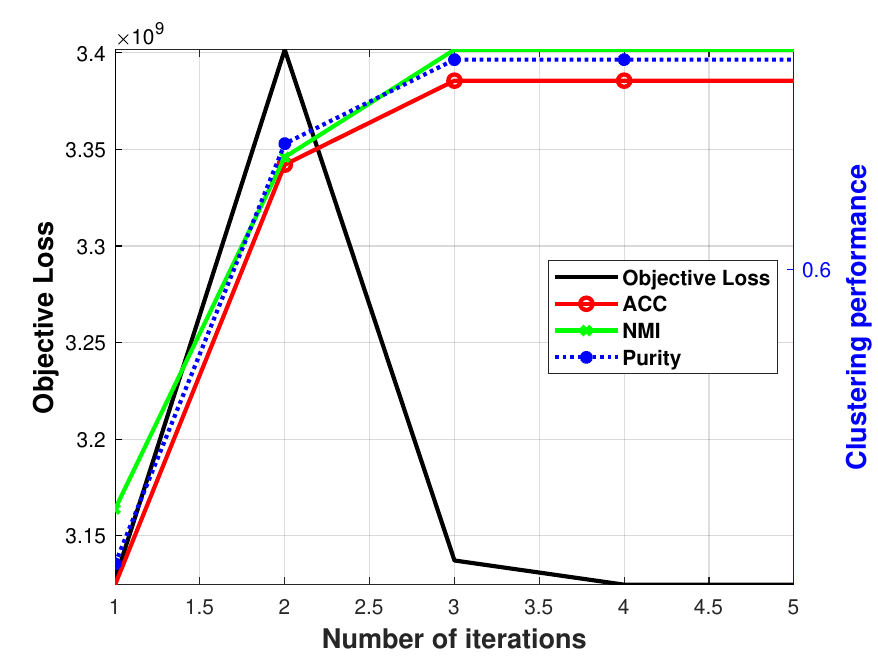}
		\caption{Yaleface}
	\end{subfigure}
	\caption{Clustering performance v.s. objective loss.}
	\label{figcon}
\end{figure}

\section{Conclusion}\label{conclusion}
This paper proposes a self-supervised manifold clustering framework that integrates K-means without centroids and graph embedding. Specifically, we construct the manifold structure through labels to ensure consistency between the labels and the manifold structure. The paper explores the relationship between manifold learning and K-means clustering without centroids, demonstrating that our algorithm is independent of the choice of cluster centers. Furthermore, we prove that maximizing the $\ell_{2,1}$-norm in our algorithm naturally promotes category balance in the clustering process. Compared with existing clustering algorithms that operate in the original space or utilize dimensionality reduction, Our-LPP and Our-MFA show significant performance improvements.

						\ifCLASSOPTIONcompsoc
						\else
						\fi
						\ifCLASSOPTIONcaptionsoff
						\newpage
						\fi
						
						{\small
							\bibliographystyle{IEEEtran}
							\bibliography{aaai25}
						}
						
					\end{document}